\newtheorem{lemma}{Lemma}
\newtheorem{assumption}{Assumption}
\newtheorem{remark}{Remark}
\newtheorem{example}{Example}
\definecolor{deanPURPLE}{rgb}{.3,0,.5}
\definecolor{jrBLUE}{rgb}{.2, .2, .8}
\definecolor{gray}{rgb}{.8,.8,.8}
\newcommand{\comment}[1]{}
\title{Optimal Weighting of Multi-View Data with Low Dimensional Hidden States}
\author{Yichao Lu \and Dean P. Foster}
\begin{document}

\tikzstyle{state}=[shape=circle,draw=blue!50,fill=blue!20]
\tikzstyle{observation}=[shape=circle,draw=blue!50,fill=red!20,thick]
\tikzstyle{duration}=[shape=rectangle,draw=blue!50,fill=red!20,thick]
\tikzstyle{lightedge}=[<-,thick,red]
\tikzstyle{mainstate}=[state,thick]
\tikzstyle{mainedge}=[<-,thick]

\maketitle
\begin{abstract}
In Natural Language Processing (NLP) tasks, data often has the following two properties: 
First, data can be chopped into multi-views which has been successfully used for dimension reduction purposes. For example, in topic classification, every paper can be chopped into the title, the main text and the references. However, it is common that some of the views are less noisier than other views for supervised learning problems. Second, unlabeled data are easy to obtain while labeled data are relatively rare. For example, articles occurred on New York Times in recent 10 years are easy to grab but having them classified as 'Politics', 'Finance' or 'Sports' need human labor. Hence less noisy features are preferred before running supervised learning methods. In this paper we propose an unsupervised algorithm which optimally weights features from different views when these views are generated from a low dimensional hidden state, which occurs in widely used models like Mixture Gaussian Model, Hidden Markov Model (HMM) and Latent Dirichlet Allocation (LDA).
\end{abstract}
\section{introduction}
In areas like Natural Language Processing, data often have multi-view and high dimension. Recently, CCA \cite{hotelling35} has been applied to the multi-view setting as a unsupervised dimension reduction method in \cite{foster07}\cite{kakade07}\cite{ando05} with performance guarantee if the data is generated under certain structure. In \cite{foster07}, they assume the high dimensional multi-view data is generated independently conditioning on a low dimensional hidden state (the model structure will be illustrated later in detail). Under this assumption, the low dimensional features provided by CCA won't lose any useful information compared with the original high dimensional features when applied to linear regression. Also, \cite{dhillon11} has applied this CCA method to generate a low dimensional vector representation of words which works well in a lot of NLP tasks.\\

The reason for CCA to work well is that the low dimensional hidden state (throughout the paper we'll use $k$ to denote the dimension of hidden state) contains most information for the supervised tasks and by doing CCA, we are able to generate $k$ dimensional estimate of the hidden state from each view as mentioned by \cite{bach05}, or more precisely, we can find all $k$ directions in the high dimensional space of each view that have non-zero correlation with the hidden state via CCA.\\

Only two views are enough to implement the CCA algorithms above (see \cite{foster07} for detailed introduction about CCA). Despite it's power in dimension reduction, CCA with two views is still not optimal in the sense that it ends up with a hidden state estimator from each view but it's impossible to tell which view is better by only looking at the two views. Here's an cute example:\\

\begin{example}
$h_0\sim N(0,1)$ be the hidden state. Conditioning on the hidden state, two views are generated independently with $v_1|h_0\sim N(h_0,0.1)$ and $v_2|h_0\sim N(h_0,10)$, Clearly $v_1$ is way better than $v_2$ if we want to estimate the hidden state since it's less noisier. However, since the only data we have are the two views, we can't do anything to figure out which view is more helpful in estimating the hidden state.
\end{example}
Similar situation happens in \cite{dhillon11} where the they have three views (the previous context, the current word, the latter context) and end up with three hidden state estimators.\\

This problem can be solved if we have three or more views. Actually, recent results have shown that more delicate problems can be solved if three or more views are available. \cite{hsu08} and \cite{song10} shows that we are able to compute sequential probability and conditional probability of an HMM with simple empirical statistics calculated from three consecutive observations. \cite{anim12} and \cite{DBLP12} proved that we are able to recover the emission matrix of mixture models with spectral methods when three different views of data are available. In this paper, we propose an algorithm where the hidden state estimators come from the three views are optimally combined  to get a cleaner estimator of the hidden state in the sense that all other directions in the space are uncorrelated with the hidden state.\\

The paper is organized as follows: In section 2 a formal mathematical statement of the problem and a short proof of the two view dimension reduction algorithm will be given as a warm up. In section 3 the three views algorithm will be stated and proofs are given. In section 4 experiments on simulated data are performed to illustrate the correctness and effectiveness of the three views algorithm. Section 5 is a short summary.
\\
\section{Preliminary}
\subsection{Model Set Up}
In the multi-view problem, we have several views $X=(X^1,X^2,..X^{n_0})$ of the input data where $X^i$ are $d_i\times 1$ random vectors and a target variable Y which need to be predicted. Take NLP problems as an example, each view $X^i$ can be the words in each paragraph of an article while $Y$ can be the topic. Or as mentioned in \cite{dhillon11} \cite{dhillon12}, $X^1$ is the previous context of a word, $X^2$ is the current word, $X^3$ is the latter context and $Y$ is properties of the current word. One key structure of our model which connects the response $Y$ and the multi-view features $X$ is the hidden state:\\
\begin{assumption}
(Conditional Independence Assumption) Conditioning on a $k$ dimensional hidden state $H$ ($k \ll d_i$ for all $i$), the one dimensional response $Y$, and the three views $X^1$, $X^2$, $X^3$ are independent (since our algorithm needs only three views, from now on we are going to assume $X$ has three views).
\end{assumption}
\begin{figure}[htbp]
\begin{center}
\begin{tikzpicture}[]

\node[mainstate] (H) at (5,2.7) {$\;\;H\;\;$};
\node[mainstate] (Y) at (5,4.8) {$\;\;Y\;\;$}
edge[mainedge](H);
\node[mainstate] (X1) at (2,1) {$\;X^1\;$}
	edge[mainedge](H);
\node[mainstate] (X2) at (5,1) {$\;X^2\;$}
	edge[mainedge](H);
\node[mainstate] (X3) at (8,1) {$\;X^3\;$}
	edge[mainedge](H);
\end{tikzpicture}
\end{center}
\caption{The model structure: Conditioning on the hidden state, three views $X^1$, $X^2$, $X^3$ and the response $Y$ are independent} 
\end{figure}
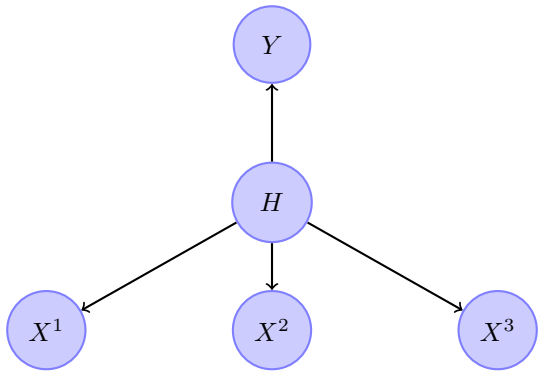
Moreover, in order CCA works,  we need assumption about the structure of the covariance matrix between each pair of views.\\

\begin{assumption}
(Linearity Assumption) $\mathbb{E}[Y|H],  \mathbb{E}[X^i|H]$ are all linear in $H$, i.e.  $\mathbb{E}[X^i|H]=M_{i}H$ and $\mathbb{E}[Y|H]=M_YH$ for some $d_i\times k$ matrix $M_i$ and $1\times k$ matrix $M_Y$.
\end{assumption}

\begin{assumption}
(Full Rank Assumption) The matrices $M_i, i=1,2,3$ have rank $k$.
\end{assumption}

A lot of models fall into this category. For example, Hidden Markov Model (HMM) which is widely used in NLP \cite{rabiner90}. Figure \ref{f2} shows an HMM with of length 3. Let the transition matrix be $T$ and the observation matrix be $O$, take $H_1$ as the hidden state, then $\mathbb{E}[X^1|H_1]=OH_1$, $\mathbb{E}[X^2|H_1]=OTH_1$, $\mathbb{E}[X^3|H_1]=OT^2H_1$ which are all linear in $H_1$. The Latent Dirichlet Allocation (LDA) model in \cite{blei03} \cite{anim12} and multi-view Gaussian Model like in \cite{bach05} \cite{DBLP12} also satisfies our assumptions.\\

\begin{figure}[htbp]
\begin{center}
\begin{tikzpicture}[]

\node[mainstate] (H1) at (2,2.8) {$\;H_1\;$};
	
\node[mainstate] (H2) at (5,2.8) {$\;H_2\;$}
	edge[mainedge](H1);
\node[mainstate] (H3) at (8,2.8) {$\;H_3\;$}
	edge[mainedge](H2);
\node[mainstate] (X1) at (2,1) {$\;X^1\;$}
	edge[mainedge](H1);
\node[mainstate] (X2) at (5,1) {$\;X^2\;$}
	edge[mainedge](H2);
\node[mainstate] (X3) at (8,1) {$\;X^3\;$}
	edge[mainedge](H3);
\end{tikzpicture}
\end{center}
\caption{HMM of length three satisfies assumption 1,2} 
\label{f2}
\end{figure}
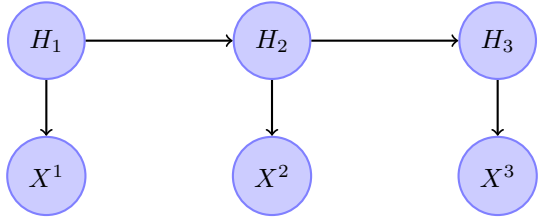
\subsection{Dimension Reduction with Two views}
In practice, $X^i$ are often high dimensional. For example, if $X^i$ are words in English, the dimension of the views are the size of the vocabulary. Another important issue is that in a lot of learning tasks, labelled data is rare while unlabeled data is common. In \cite{dhillon11} \cite{dhillon12} it's easy to get word with its context from the internet, while getting words labeled as 'plants' or 'animals' need a lot human effort. These observations lead to unsupervised dimension reduction algorithms which is illustrated by detail in \cite{foster07}. Here we briefly go though the two view case as a warm up for the three view situation. For simplicity, assume $H, X^1,X^2$ has $0$ mean and identity variances, since we can always whiten the views and the hidden state.\\

Let $\Sigma_{a,b}$ denote the covariance matrix of vector $a,b$ and $\Sigma_{i,Y}$ denote the covariance of $X^i$ and $Y$ (so the integer $i$ refers to the $i^{th}$ view). A straightforward conclusion following Assumption 1,2 is (lemma 7 in \cite{foster07}):
\begin{lemma}
\begin{eqnarray}
\Sigma_{1,2} &=&\mathbb{E}[\mathbb{E}[X^1{X^2}^T|H]]= M_1\mathbb{E}[HH^T]M_2^T=M_1M_2^T\\
\Sigma_{1,Y} &=&\mathbb{E}[\mathbb{E}[X^1{Y}^T|H]]= M_1\mathbb{E}[HH^T]M_Y^T=M_1M_Y^T
\end{eqnarray}
\end{lemma}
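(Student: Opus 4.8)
The plan is to compute the cross-covariances directly by conditioning on the hidden state $H$ and invoking the conditional independence and linearity assumptions. First I would use the zero-mean normalization to write $\Sigma_{1,2} = \mathbb{E}[X^1 {X^2}^T]$ and $\Sigma_{1,Y} = \mathbb{E}[X^1 Y^T]$, with no correction terms for means. Then I would apply the tower property (law of total expectation) to pull out a conditioning on $H$: $\mathbb{E}[X^1 {X^2}^T] = \mathbb{E}\big[\mathbb{E}[X^1 {X^2}^T \mid H]\big]$.

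The key step is the inner expectation. By Assumption 1, $X^1$ and $X^2$ are independent conditionally on $H$, so $\mathbb{E}[X^1 {X^2}^T \mid H] = \mathbb{E}[X^1 \mid H]\,\mathbb{E}[{X^2}^T \mid H]$. By Assumption 2 (linearity), $\mathbb{E}[X^1 \mid H] = M_1 H$ and $\mathbb{E}[X^2 \mid H] = M_2 H$, hence $\mathbb{E}[X^1 {X^2}^T \mid H] = M_1 H H^T M_2^T$. Taking the outer expectation and using linearity of expectation to move $M_1, M_2^T$ outside gives $M_1 \mathbb{E}[HH^T] M_2^T$. Finally, the normalization that $H$ has identity variance (and zero mean) gives $\mathbb{E}[HH^T] = I_k$, so $\Sigma_{1,2} = M_1 M_2^T$. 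The identical argument with $X^2$ replaced by $Y$, using that $Y \perp X^1 \mid H$ and $\mathbb{E}[Y \mid H] = M_Y H$, yields $\Sigma_{1,Y} = M_1 M_Y^T$.

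There is essentially no obstacle here; the only point requiring a little care is making sure the normalization assumptions are used consistently — that zero means justify identifying second moments with covariances, and that $\mathbb{E}[HH^T] = I_k$ rather than a general covariance $\Sigma_H$ (in the un-whitened case one would simply carry $\Sigma_H$ through and absorb it into the $M_i$). This is exactly Lemma 7 of \cite{foster07}, so I would keep the argument to the three or four lines above.
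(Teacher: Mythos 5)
Your proof is correct and follows exactly the chain of equalities displayed in the lemma itself: zero means identify covariances with second moments, the tower property conditions on $H$, conditional independence (Assumption 1) factorizes the inner expectation, linearity (Assumption 2) gives $M_1HH^TM_2^T$, and the identity covariance of $H$ finishes. This is the same argument the paper intends (it cites it as Lemma 7 of \cite{foster07} and treats it as immediate), so there is nothing to add.
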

Since both views have identity variances, CCA between $X^1$ and $X^2$ reduce to only an Singular Value Decomposition(SVD) of the covariance matrix(see \cite{foster07} and \cite{hotelling35} for introduction about CCA).\\

Let  $\Sigma_{1,2}=UDV^T$ be the SVD of the covariance (since the variances are identity, it's also correlation) matrix. Let $U_{1:k}$, $V_{1:k}$ be the first $k$ columns of $U, V$. By assumption 3,
\begin{equation}
\Sigma_{1,2}=M_1M_2^T=U_{1:k}DV_{1:k}^T
\end{equation}
 By definition of CCA, the Canonical Variables are $X'^1=U^TX^1$ and $X'^2=V^TX^2$. \cite{foster07}  (theorem 3) claims that it suffices to pick the top $k$ canonical variables from each view, i.e. $X_{1:k}'^1=U_{1:k}^TX^1$ and $X_{1:k}'^2=V_{1:k}^TX^2$ as features if we predict $Y$ with linear regression.\\

The reason of their claim lies in two aspects:\\
First, the covariance between $X_{k+1:d_1}'^1$, the feature in view 1 we throw away and $Y$ is 
\begin{eqnarray}
\Sigma_{X_{k+1:d_1}'^1,Y}&=&U_{k+1:d_1}^TM_1M_Y^T
\end{eqnarray}
From equation (3),  the range of $M_1$ is the same as the range of $U_{1:k}$, hence columns of $U_{k+1:d_1}$ are orthogonal to columns of $M_1$. Together with (4), $\Sigma_{X_{k+1:d_1}'^1,Y}=0$. Similarly, $\Sigma_{X_{k+1:d_2}'^2,Y}=0$. In other words, the directions(or features) we dropped with CCA are uncorrelated with our target variable $Y$.\\
Second, we have the following lemma for linear regression:

\begin{lemma}
We have two group of features $(Z_1,Z_2)$, and want to predict $Y$ linearly with $(Z_1,Z_2)$. Suppose
the covariance matrices satisfy 
\begin{eqnarray*}
 \Sigma_{Y,Z_2}&=&0\\
 \Sigma_{Z_1,Z_2}&=&0
\end{eqnarray*}
Then the optimal linear predictor (in terms of the square loss) with $Z_1$ is the same as the optimal linear predictor with $(Z_1,Z_2)$.

\end{lemma}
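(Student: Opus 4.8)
The plan is to compute the optimal linear predictor of $Y$ from the full feature set $(Z_1, Z_2)$ directly via the normal equations and show that the coefficient block on $Z_2$ vanishes, leaving exactly the normal equations for regressing $Y$ on $Z_1$ alone. First I would recall that the optimal linear predictor of $Y$ from a mean-zero feature vector $Z$ (we may assume zero means throughout, consistent with the whitening convention used earlier in the paper) has coefficient vector $\beta = \Sigma_{Z,Z}^{-1}\Sigma_{Z,Y}$, the minimizer of the square loss $\mathbb{E}\|Y - \beta^T Z\|^2$. So I would write $Z = (Z_1, Z_2)$ and partition the joint covariance matrix into blocks.

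Next, using the two hypotheses $\Sigma_{Z_1,Z_2} = 0$ and $\Sigma_{Y,Z_2} = 0$, the joint covariance $\Sigma_{Z,Z}$ becomes block-diagonal, $\diag(\Sigma_{Z_1,Z_1}, \Sigma_{Z_2,Z_2})$, and the cross-covariance vector becomes $\Sigma_{Z,Y} = (\Sigma_{Z_1,Y}, 0)$. Inverting a block-diagonal matrix is trivial — it is the block-diagonal of the inverses — so the full coefficient vector is $(\Sigma_{Z_1,Z_1}^{-1}\Sigma_{Z_1,Y},\; \Sigma_{Z_2,Z_2}^{-1}\cdot 0) = (\Sigma_{Z_1,Z_1}^{-1}\Sigma_{Z_1,Y},\; 0)$. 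The first block is precisely the coefficient vector one obtains regressing $Y$ on $Z_1$ alone, and the zero second block means $Z_2$ contributes nothing to the prediction; hence the two optimal predictors coincide.

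An alternative, slightly more robust route (if one does not want to assume invertibility of the covariance blocks) is the geometric/projection argument: the optimal linear predictor is the $L^2$-projection of $Y$ onto the closed linear span of the coordinates of $(Z_1, Z_2)$. By hypothesis every coordinate of $Z_2$ is orthogonal to every coordinate of $Z_1$ and to $Y$; one checks that the projection of $Y$ onto $\mathrm{span}(Z_1) \oplus \mathrm{span}(Z_2)$ equals the projection onto $\mathrm{span}(Z_1)$ plus the projection onto $\mathrm{span}(Z_2)$, and the latter is $0$ since $Y \perp Z_2$. This avoids any matrix inversion and handles the rank-deficient case cleanly. I would probably present the normal-equations version for concreteness and remark that the projection argument gives the general case.

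The main obstacle is essentially bookkeeping rather than a genuine difficulty: one must be careful that orthogonality of $Z_1$ and $Z_2$ as blocks (i.e. $\Sigma_{Z_1,Z_2}=0$) really does make the span decompose as a direct sum on which projection is additive — this uses that the residual $Y - \Pi_{Z_1}Y$ stays orthogonal to $Z_2$, which follows because $Y \perp Z_2$ and $\Pi_{Z_1}Y \in \mathrm{span}(Z_1) \perp Z_2$. Once that is noted the rest is immediate.
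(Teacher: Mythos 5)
Your proposal is correct. Your primary route --- writing the joint covariance of $(Z_1,Z_2)$ in block form, observing that the hypotheses make it block-diagonal with cross-covariance $(\Sigma_{Z_1,Y},\,0)$, and reading off that the coefficient block on $Z_2$ is zero --- is a more computational argument than the one the paper gives. The paper goes straight to your ``alternative'' route: it works in the Hilbert space of random variables with covariance as inner product and notes that since $Y\perp\mathrm{span}(Z_2)$ and $\mathrm{span}(Z_1)\perp\mathrm{span}(Z_2)$, the projection of $Y$ onto $\mathrm{span}(Z_1,Z_2)$ coincides with the projection onto $\mathrm{span}(Z_1)$. The normal-equations version buys concreteness and makes the vanishing of the $Z_2$-coefficients explicit, at the cost of assuming the covariance blocks are invertible; the projection argument is shorter, handles rank-deficient features, and is the one the paper actually uses. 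Your closing observation --- that the residual $Y-\Pi_{Z_1}Y$ remains orthogonal to $Z_2$ because both $Y$ and $\Pi_{Z_1}Y$ are --- is exactly the point that makes the paper's one-line projection argument rigorous, so either presentation would be acceptable.
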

\begin{proof}
Consider the Hilbert space of random variables where covariance is the inner product. The optimal linear predictor with $Z_1,Z_2$ is the projection of $Y$ onto the linear span of them. Our assumption means $Y$ perpendicular to span of $Z_2$ ($Y$ has zero covariance with $Z_2$ and covariance is the inner product), span of $Z_1$ perpendicular to span of $Z_2$, so the projection of $Y$ onto span of $Z_1,Z_2$ is the same as to the projection of $Y$ onto $Z_1$.
\end{proof}

Let $Z_1=(X'^1_{1:k},X'^2_{1:k})$ and $Z_2=(X'^1_{k+1:d_1},X'^2_{k+1:d_2})$, this partition satisfies lemma 2. Therefore the optimal linear predictor with the low dimensional feature $(X'^1_{1:k},X'^2_{1:k})$ will be the same as $X^1,X^2$, or in other words, we get dimension reduction from $d_1+d_2$ to $2k$ for free.\\

\begin{remark}
After doing CCA, we obtain one $k$ dimensional feature from each view, which can be regarded as estimators of the $k$ dimensional hidden state. In order to estimate some feature $Y$ which are independent of these views conditioning on hidden state, one can first estimate the hidden state via CCA(unsupervised), then predict $Y$ with the hidden state estimators(supervised). The key property of CCA is the features throw away are uncorrelated with the $Y$, so it's reasonable to expect the CCA method to work well with other linear learning methods.
\end{remark}

\section{Optimal Weighting via Three Views}
As introduced in previous section, the two view CCA helps reducing dimension of the views to $k$, the dimension of hidden state. But one drawback of the two view CCA is we get one low dimensional estimator of the hidden state from each view, which may not be equally informative as illustrated in example 1. For instance, the abstract, main content and references can all help classify the topic of a paper, but are not equally informative. The main contribution of this paper is we find a way to optimally combine the estimators of the hidden state from each view to get a new hidden state  estimator if three or more views are available.\\

Here is the precise statement:\\
Assume we have three $k$ dimensional views $X^1,X^2,X^3$(since we can reduce the dimension of each view to $k$ with the CCA) and $Y$ generated by a $k$ dimensional hidden state and satisfy assumption 1,2,3. Use $X=(X^1;X^2;X^3)$ to denote the catenation of the three views (so $X$ is a $3k\times 1$ vector). Our goal is to look for a $3k*k$ matrix $U_1$ such that  the optimal linear predictor (in terms of square loss) with the new $k$ dimensional feature $X^*=U_1^TX$ is the same as the optimal linear predictor with the $3k$ dimensional feature $X$. In other words, $U_1$ optimally combines the hidden state estimators from each view. Still assume everything is mean $0$ and the hidden state $H$ has identity variance.\\

The following lemma proves the existence of the optimal $k$ dimensional feature $X^*$:
\begin{lemma}
 There exist a $k$ dimensional subspace in the linear span of $X^1,X^2,X^3$ (which is $3k$ dimensional) such that the optimal predictor with this subspace is the same as the optimal predictor with the whole space.
\end{lemma}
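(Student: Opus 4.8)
The plan is to argue in the Hilbert space of mean-zero random variables equipped with the covariance inner product — exactly the setting used in the proof of Lemma 2 — where the optimal linear predictor of $Y$ from a collection of features equals the orthogonal projection of $Y$ onto their linear span. Write $\widehat Y:=\mathrm{proj}_{\mathrm{span}(X)}Y$ for the optimal predictor built from the full $3k$-dimensional feature $X=(X^1;X^2;X^3)$. It then suffices to produce a $k$-dimensional subspace $S\subseteq\mathrm{span}(X)$ with $\widehat Y\in S$: taking $Z_1$ a basis of $S$ and $Z_2$ a basis of the orthogonal complement of $S$ inside $\mathrm{span}(X)$, Lemma 2 will give that the optimal predictor from $Z_1$ equals the optimal predictor from $(Z_1,Z_2)$, and the latter spans $\mathrm{span}(X)$. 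Of course $\widehat Y$ by itself spans a subspace of dimension at most one, so \emph{some} such $S$ exists trivially; the real content is to take $S$ to be the natural choice — the span of the coordinates of the best linear reconstruction of the hidden state $H$ from the three views — and check that this choice is genuinely $k$-dimensional and genuinely contains $\widehat Y$, since it is this $S$ that "optimally combines" the three per-view estimators and that we will later compute without labels.

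First I would separate signal from noise. Put $W:=Y-M_YH$ and $Z^i:=X^i-M_iH$. Assumption 2 gives $\mathbb{E}[W\mid H]=0$ and $\mathbb{E}[Z^i\mid H]=0$, so $W$ and each $Z^i$ are uncorrelated with $H$; and by Assumption 1, conditionally on $H$ the variable $Y$ (hence $W$) is independent of $X^i$, so $\mathbb{E}[W(X^i)^T]=\mathbb{E}\!\left[\mathbb{E}[W\mid H]\,\mathbb{E}[X^i\mid H]^T\right]=0$. Thus $W$ is uncorrelated with every $X^i$ and therefore orthogonal to all of $\mathrm{span}(X)$. Consequently
\[
\widehat Y=\mathrm{proj}_{\mathrm{span}(X)}\big(M_YH+W\big)=M_Y\,\widehat H,
\]
where $\widehat H$ is the $k$-vector whose $\ell$-th entry is $\mathrm{proj}_{\mathrm{span}(X)}H_\ell$, i.e. the coordinate-wise projection of the hidden state onto the span of the three views.

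Next let $S\subseteq\mathrm{span}(X)$ be the linear span of the $k$ random variables $\widehat H_1,\dots,\widehat H_k$; if these are linearly dependent, enlarge $S$ arbitrarily inside $\mathrm{span}(X)$ so that $\dim S=k$, which is possible since $\mathrm{span}(X)$ has dimension $3k\ge k$. Then $S$ is $k$-dimensional, $S\subseteq\mathrm{span}(X)$, and $\widehat Y=M_Y\widehat H$ lies in $S$ because it is a linear combination of $\widehat H_1,\dots,\widehat H_k$. With $Z_1$ a basis of $S$ and $Z_2$ a basis of the orthogonal complement $S^{c}$ of $S$ inside $\mathrm{span}(X)$, we get $\Sigma_{Z_1,Z_2}=0$ by orthogonality, and $\Sigma_{Y,Z_2}=0$ because $Y-\widehat Y\perp\mathrm{span}(X)\supseteq S^{c}$ while $\widehat Y\in S\perp S^{c}$. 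Lemma 2 then yields that the optimal linear predictor of $Y$ with $Z_1$ equals that with $(Z_1,Z_2)$, which spans $\mathrm{span}(X)$; taking $U_1$ to be the $3k\times k$ matrix whose columns express $Z_1$ in the coordinates of $X$ produces $X^*=U_1^TX$ with the stated property.

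I expect the obstacle here to be conceptual rather than technical: the existence claim is nearly immediate, so the proof must be organized around the \emph{useful} $S$ — the best linear reconstruction of $H$ — and the care goes into (i) staying inside the covariance Hilbert space so that no invertibility of $\Sigma_{XX}$ need be assumed, and (ii) the (harmless) padding step when $\widehat H_1,\dots,\widehat H_k$ happen to be linearly dependent. Producing an \emph{explicit} $U_1$, as opposed to mere existence, is a separate matter, requiring one to identify $S$ with the span of the coordinates of the linear MMSE estimate of $H$ given $X$ and to express it through the covariances among the three views — the subject of the rest of this section. The point worth stressing is that $S$ involves neither $Y$ nor $M_Y$, so $U_1$ can be estimated unsupervised, improving on the two-view procedure of Section 2 by fusing the three per-view hidden-state estimators into a single $k$-dimensional one.
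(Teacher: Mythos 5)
Your proof is correct, but it reaches the subspace by a different road than the paper does. The paper runs a CCA between $H$ and the concatenated vector $X$, takes the top $k$ canonical components $X'_{1:k}$ as the subspace, and then shows the discarded components $X'_{k+1:3k}$ are uncorrelated with $H$ (so their conditional-mean matrix $M_4$ vanishes) and hence, via Lemma~1, uncorrelated with $Y$; Lemma~2 finishes. You instead split $Y=M_YH+W$ and each $X^i=M_iH+Z^i$, observe that $W$ is orthogonal to $\mathrm{span}(X)$ by conditional independence plus linearity, conclude $\widehat Y=M_Y\widehat H$ with $\widehat H=\mathrm{proj}_{\mathrm{span}(X)}H$, and take $S=\mathrm{span}(\widehat H_1,\dots,\widehat H_k)$ (padded if degenerate); Lemma~2 again finishes. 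The two subspaces coincide when all $k$ canonical correlations are nonzero — the paper's own Remark~2 states that the CCA directions are exactly the projection of the hidden state onto the feature space — and your construction is the more elementary and more explicitly motivated one: it needs no SVD and makes transparent that the subspace is the linear MMSE reconstruction of $H$, handling the degenerate case by padding where the paper's CCA version is automatically $k$-dimensional. What the paper's CCA formulation buys in exchange is a direct bridge to the algorithm that follows, which is built entirely out of the rotation matrices $U_1,U_2$ of that same CCA and recovers $U_2$ from pairwise-view CCAs. One small caution in your write-up: you assert $\dim\mathrm{span}(X)=3k$, which requires $\Sigma_{X,X}$ to be nonsingular (the paper tacitly assumes this too, e.g.\ when forming $Q^{-1}$); for the padding step all you actually need is $\dim\mathrm{span}(X)\ge k$, which already follows from Assumption~3 applied to a single view.
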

\begin{proof}
Do a Canonical Correlation Analysis between random vectors $H$ and $X$, Let $X'_{1:k}$ denote the first $k$ canonical components of $X$, $X'_{k+1:3k}$ be the rest. Since $H$ is only $k$ dimensional, by the definition of CCA, $\Sigma_{X'_{k+1:3k},H}=0$ and $\Sigma_{X'_{1:k},X'_{k+1,3k}}=0$.\\
By assumption 2, $\mathbb{E}[X'_{K+1:3K}|H]=M_4H$ for some $2k*k$ matrix $M_4$. Since 
\begin{equation}
\Sigma_{X'_{k+1:3k},H}=\mathbb{E}[\mathbb{E}[X'_{k+1:3k}H^T|H]]=M_4\mathbb{E}[HH^T]=M_4I=0
\end{equation}
We know $M_4=0$. Lemma 1 implies $\Sigma_{X'_{k+1:3k},Y}=M_4M_Y^T=0$. Apply lemma 2, The optimal linear predictor with $X'$ (the same as optimal linear predictor with $X$) is the same as the optimal linear predictor with $X^*=X'_{1:k}$.
\end{proof}

Our algorithm find the above optimal subspace in a relatively indirect way. In order to illustrate the rationale behind the algorithm, it's helpful to dig a little bit into the CCA proof of lemma 3.\\

Let the rotation matrix on $X$ given by the above CCA be $U_0=(U_1,U_2)$, and $X'_{1:k}=U_1^TX$, $X'_{k+1:3k}=U_2^TX$. Let $Q=\Sigma_{X,X}^{\frac{1}{2}}$, $Q^{-1}$ can be used to whitten $X$ to have identity covariance. Let $X''=Q^{-1}X$, and $\Sigma_{X'',H}$ has the full SVD:
\begin{equation*}
\Sigma_{X'',H}=PDV_0^T
\end{equation*}
Since $X''$ and $H$ all has identity covariance, the above SVD actually gives the CCA rotation for random vector $X''$ and $H$, i.e $P^TX''$ are the canonical variables. Moreover, since $X''=Q^{-1}X$, we know $U_0=Q^{-1}P$ is the CCA rotation matrix for $X$. Let $P=(P_1, P_2)$ where $P_1$ denotes the first $k$ columns and $P_2$ denotes the last $2k$ columns, then 
\begin{eqnarray}
U_1&=&Q^{-1}P_1\\
U_2&=&Q^{-1}P_2
\end{eqnarray}
Our goal is to look for $U_1$, then we can get the optimal subspace by $X^*=U_1^TX$. The trick for the algorithm is, we first estimate the column space of $U_2$, which is relatively easy, then we can find  the column space of $P_2$ based on (7) since $Q$, as the square root of the covariance of $X$ is easy to estimate. By property of SVD, $P_1 \perp P_2$ (means the column spaces of the two matrices are perpendicular), so we can reconstruct the column space of  $P_1$ based on $P_2$ easily (note that $U_1$ is not perpendicular to $U_2$). Finally ,we can find column space of  $U_1$ with $P_1$ and $Q$ by (6).\\

Based on the above argument, it suffices to find the column space of $U_2$. We need the following lemma:
\begin{lemma}
Let $a\in R^{3k\times 1}$ be a direction in $3k$ dimensional space. If for any $b\in R^{k\times 1}$, $Cov(a^TX, b^TH)=0$, $a$ lies in the column space of $U_2$
\end{lemma}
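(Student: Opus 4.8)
The plan is to turn the covariance hypothesis into a plain matrix equation, push it through the whitening map $Q$ and the SVD $\Sigma_{X'',H}=PDV_0^T$, and then read off the conclusion from the fact that the blocks $P_1$ and $P_2$ span orthogonal complementary subspaces of $R^{3k\times 1}$.

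First I would record that $\mathrm{Cov}(a^TX,b^TH)=a^T\Sigma_{X,H}b$, so the hypothesis ``$\mathrm{Cov}(a^TX,b^TH)=0$ for all $b$'' is exactly the statement $a^T\Sigma_{X,H}=0$. Since $X=QX''$ with $Q=\Sigma_{X,X}^{\frac{1}{2}}$ the symmetric square root (hence invertible), we have $\Sigma_{X,H}=Q\Sigma_{X'',H}=QPDV_0^T$, and because $V_0$ is orthogonal this is equivalent to $a^TQPD=0$. Next I would exploit the rank structure: by Assumption 2, $\Sigma_{X,H}=\mathbb{E}[\mathbb{E}[X|H]H^T]=M\,\mathbb{E}[HH^T]=M$ where $M=(M_1;M_2;M_3)$, and by Assumption 3 this matrix has rank $k$; therefore $\Sigma_{X'',H}=Q^{-1}M$ also has rank $k$, so in the full SVD only the top $k\times k$ block of $D$ is nonzero and it equals an invertible diagonal matrix $D_1$. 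Consequently $PD=P_1D_1$, and $a^TQPD=0$ collapses, after cancelling the invertible factor $D_1$, to $a^TQP_1=0$.

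Finally, using that $Q$ is symmetric, $a^TQP_1=0$ says $(Qa)^TP_1=0$, that is, $Qa$ is orthogonal to every column of $P_1$. Since $(P_1\ P_2)$ is an orthogonal $3k\times 3k$ matrix, the column space of $P_2$ is precisely the orthogonal complement of the column space of $P_1$, so $Qa=P_2c$ for some $c\in R^{2k\times 1}$. Left-multiplying by $Q^{-1}$ and using $U_2=Q^{-1}P_2$ yields $a=U_2c$, so $a$ lies in the column space of $U_2$.

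I do not expect a genuine obstacle; the two points needing care are invoking Assumption 3 so that the suppressed block $D_1$ is actually invertible (which legitimizes the cancellation step), and using the symmetry of the whitening matrix $Q$ so that $a^TQP_1=0$ can be rephrased as $Qa$ being orthogonal to the column space of $P_1$ — without symmetry one would only conclude $a\in\mathrm{colspace}(Q^{-T}P_2)$, which need not coincide with $\mathrm{colspace}(U_2)$.
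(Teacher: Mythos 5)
Your proof is correct, and it takes a noticeably different (and in some ways tighter) route than the paper's. The paper decomposes $a=c+d$ with $c$ in the column space of $U_1$ and $d$ in the column space of $U_2$, tests the covariance hypothesis against each canonical direction $v_j$ of $H$, and kills the coefficients of $c$ one at a time using the biorthogonality $Cov(u_i^TX,v_j^TH)=0$ for $i\ne j$ together with the fact that the $j$-th canonical correlation is nonzero. You instead translate the hypothesis into the single matrix identity $a^T\Sigma_{X,H}=0$, factor $\Sigma_{X,H}=QP_1D_1V_0^T$ through the whitening and the SVD, and read off $Qa\perp\mathrm{colspace}(P_1)$, hence $Qa\in\mathrm{colspace}(P_2)$ and $a\in\mathrm{colspace}(U_2)$. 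The two arguments rest on the same underlying facts (your cancellation of the invertible block $D_1$ is exactly the paper's appeal to nonzero canonical correlations, and your use of the orthogonality of $P=(P_1\ P_2)$ replaces the coefficient-by-coefficient elimination), but your version has two advantages: it makes explicit that Assumption 3 is what guarantees $\Sigma_{X,H}=M$ has rank $k$ and hence that $D_1$ is invertible, a point the paper asserts without justification, and it isolates the role of the symmetry of $Q$ in converting $a^TQP_1=0$ into a statement about $Qa$, which the paper uses implicitly.
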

\begin{proof}
Let $a=c+d$ where $c$ is in the column space of $U_1$ and $d$ is in column space of $U_2$ (since $U_1,U_2$ span the whole space and have no intersection except 0, this decomposition of $a$ is unique). It suffices to show $c=0$. Note that
\begin{eqnarray*}
Cov(a^TX,b^TH)&=&Cov(c^TX,b^TH)+Cov(d^TX,b^TH)\\
&=&Cov(c^TX,b^TH)
\end{eqnarray*}
since $d$ is in column space of $U_2$. Let $U_1=(u_1,u_2,u_3,..u_k)$, since $c$ lies in column space of $U_1$, $c=\sum_{i=1}^k\alpha_iu_i$. Pick $b$ to be any canonical directions of $H$, i.e any column of $V_0$, by the assumption of our lemma,  $Cov(d^TX,b^TH)=0$ for all these $b$. Denote $V_0=(v_1,v_2..v_k)$. Moreover, since $u_i, v_j$ are canonical directions, $Cov(u_i^TX,v_j^TH)=0$ if $i \ne j$. Therefore 
\begin{equation*}
0=Cov((\sum_{i=1}^k\alpha_iu_i)^TX,v_j^TH)=Cov((\alpha_ju_j)^TX,v_j^TH)
\end{equation*}
for all $j$. This implies $\alpha_j=0$ for $j=1..k$ since $Cor(u_j^TX,v_j^TH)$ is the $j^{th}$ canonical correlation which is non zero. Therefore $c=0$, $a=d$ lies in the column space of $U_2$.

\end{proof}
The above lemma shows that in order to find the column space of $U_2$, it suffices to find $2k$ linear independent directions that satisfies lemma 2, which is easy. Run a CCA between random vectors $X^1$ and $(X^2;X^3)$, we have:
\begin{lemma}
the last $k$ canonical directions of $(X^2;X^3)$ has 0 correlation matrix with $H$, hence satisfy lemma 4. 
\end{lemma}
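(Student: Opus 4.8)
The plan is to reduce the claim to the single observation that the cross-covariance $\Sigma_{X^1,(X^2;X^3)}$ and the matrix $\Sigma_{(X^2;X^3),H}^T$ have the same null space; this follows from Assumptions 1--3 by the same one-line computation used for Lemma 1, after which the conclusion is immediate.

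Write $Z=(X^2;X^3)\in R^{2k}$ and let $N=(M_2;M_3)$ be the $2k\times k$ matrix stacking $M_2$ over $M_3$. By Assumption 2, $\mathbb{E}[Z|H]=NH$; by Assumption 3, $M_2$ and $M_3$ both have rank $k$, so $N$ has rank $k$; and $M_1$ is a $k\times k$ matrix of rank $k$, hence invertible. First I would compute, using the Conditional Independence Assumption and the tower rule exactly as in Lemma 1,
\begin{eqnarray*}
\Sigma_{X^1,Z}&=&\mathbb{E}[\mathbb{E}[X^1Z^T|H]]=M_1\,\mathbb{E}[HH^T]\,N^T=M_1N^T,\\
\Sigma_{Z,H}&=&\mathbb{E}[\mathbb{E}[ZH^T|H]]=N\,\mathbb{E}[HH^T]=N,
\end{eqnarray*}
using $\mathbb{E}[HH^T]=I$. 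Hence $\Sigma_{X^1,Z}=M_1\Sigma_{Z,H}^T$, and since $M_1$ is invertible the matrices $\Sigma_{X^1,Z}$ and $\Sigma_{Z,H}^T=N^T$ have the same null space, which is $k$-dimensional because $N$ has rank $k$.

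Next I would identify the last $k$ canonical directions of $Z$ with a basis of this common null space. A direction $w$ in the original coordinates of $Z$ is a canonical direction of $Z$ carrying zero canonical correlation with $X^1$ iff $w^TZ$ is uncorrelated with every linear functional of $X^1$, i.e. iff $\Sigma_{X^1,Z}\,w=0$; since the leftover block has dimension $2k-\mathrm{rank}(\Sigma_{X^1,Z})=2k-k=k$, these directions form a basis of $\mathrm{null}(\Sigma_{X^1,Z})=\mathrm{null}(N^T)$. For any such $w$ and any $b\in R^k$,
\begin{equation*}
Cov(w^TZ,b^TH)=w^T\Sigma_{Z,H}\,b=w^TNb=(N^Tw)^Tb=0,
\end{equation*}
so the correlation matrix between the vector of last-$k$ canonical components of $Z$ and $H$ vanishes. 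Padding each $w$ with zeros on the $X^1$ coordinates gives $a=(0;w)\in R^{3k}$ with $a^TX=w^TZ$, so these $k$ directions satisfy the hypothesis of Lemma 4; running the analogous CCA with another view playing the role of $X^1$ supplies the remaining independent directions needed to span the column space of $U_2$.

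The one step deserving care is the identification in the third paragraph: I must check that ``canonical direction of $Z$ with zero canonical correlation'' is precisely $\{w:\Sigma_{X^1,Z}w=0\}$ and not just its whitened-coordinate analogue. This holds because the property ``$Cov(w^TZ,f^TX^1)=0$ for all $f$'' is coordinate-free, and whitening $Z$ is an invertible change of variables that only repackages such directions into an orthonormal canonical basis without changing which directions have the property; concretely, if $\Sigma_{X^1}^{-1/2}\Sigma_{X^1,Z}\Sigma_Z^{-1/2}=PDV_0^T$ is the whitened SVD, its zero-singular-value right vectors $v$ correspond to $w=\Sigma_Z^{-1/2}v$ satisfying $\Sigma_{X^1,Z}w=0$. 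Everything else is the routine covariance bookkeeping above.
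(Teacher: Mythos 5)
Your proof is correct and follows essentially the same route as the paper's: both arguments boil down to writing the cross-covariance of $(X^2;X^3)$ with $X^1$ as $M_1N^T$ (the paper phrases this as $\Sigma_{X_{23},X^1}=M_5M_1^T=0$), invoking the zero canonical correlations of the last $k$ directions, and using the invertibility of $M_1$ from Assumption 3 to conclude zero covariance with $H$. Your version is somewhat more careful in justifying that the last $k$ canonical directions are exactly a basis of $\mathrm{null}(\Sigma_{X^1,Z})$, a step the paper takes for granted.
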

\begin{proof}
Denote the rotation matrix corresponding to the last $k$ directions by $R_1\in R^{2k\times k}$, $X_{23}=R_1^T(X_2;X_3)$ are the last k canonical variables. By assumption 2, $\mathbb{E}[X_{23}|H]=M_5H$ and $\mathbb{E}[X^1|H]=M_1H$. Lemma 1 indicates $\Sigma_{X_{23},1}=M_5M_1^T=0$, since $M_1$ is $k*k$ full rank by assumption 3, $M_5=0$, so $\Sigma_{X_{23},H}=\mathbb{E}[M_5H]=0$.
\end{proof}
Similarly, run a CCA (or Canonical Covariance Analysis) between random vectors $X^3$ and $(X^1;X^2)$, the last $k$ canonical directions of $(X^1;X^2)$ has 0 correlation matrix with $H$, hence satisfy lemma 2. Denote the rotation matrix corresponding to the last $k$ directions by $R_2\in R^{2k\times k}$. For notation convenience, let
\begin{displaymath} 
\mathbf{R_1} = \left( \begin{array}{c} R_{11} \\
R_{21}\\ 
\end{array} \right) 
\end{displaymath}

\begin{displaymath} 
\mathbf{R_2} = \left( \begin{array}{c} R_{12} \\
R_{22}\\  
\end{array} \right) 
\end{displaymath}
where all the blocks $R_{i,j}$ are $k\times k$.
Finally, let $O$ be $k\times k$ matrix with all zeros, Let 
\begin{equation} 
\mathbf{R} = \left( \begin{array}{cc} R_{11}& O \\
R_{21}& R_{12}\\
O & R_{22}\\ 
\end{array} \right) 
\end{equation}
If the $R$ is full rank (which is true in most case), the column space of $R$ is exactly the column space of $U_2$ since every column of $R$ satisfies lemma 2, and it form a basis.\\

Based on the above argument, the algorithm for finding the optimal $k$ dimensional subspace is:
\begin{table}[!hbp]
\begin{center}
\begin{tabular}{|l|l|l|}
\hline
&\textbf{Algorithm: Optimal Weighting of Three Views}\\
\hline
Step1 & Estimate the $3k\times 3k$ covariance matrix $\Sigma_{X,X}$ empirically, \\&and compute $Q$ as the square root of $\Sigma_{X,X}$ \\
\hline
Step2 & Perform CCA between $X^1$ and$(X^2,X^3)$ to obtain rotation matrix $R_1$.\\& Perform CCA between $X^3$ and$(X^1,X^2)$ to obtain rotation matrix $R_2$\\
\hline
Step3 & Construct $R$ based on $R_1, R_2$ with equation (4) \\
\hline
Step4  & Compute $P_2=QR$\\
\hline
Step5  & Compute $P_1$ by finding the orthogonal complement of $P_2$\\
\hline
Step6  & Compute $U_1=Q^{-1}P_1$\\&$U_1$ is the matrix which project $X$ to the optimal $k$ dimensional subspace.\\
\hline

\end{tabular}
\end{center}
\caption{Finding Optimal $k$ Dimensional Subspace with Three Views}
\end{table}     
\begin{remark}
In dimension reduction point of view, running two views CCA between each pair of views reduce the dimension from $d_1+d_2+d_3$ to $3k$ and running the three views algorithm reduce the dimension from $3k$ to $k$. By doing CCA we find a $k$ dimensional subspace in the $d_1+d_2+d_3$ huge space which contains all the useful information in predicting the hidden state $H$ and hence the variable $Y$. In fact this is the optimal unsupervised dimension reduction possible since the projection (in the Hilbert Space of random variables) of the hidden state onto the $d_1+d_2+d_3$ feature space is exactly the $k$ dimensions given by the CCA. 
\end{remark}

\section{Experiments On Simulated Data}
In this section the three view algorithm is applied to a normal model. In this model, we have a $k=10$ dimensional normal hidden state $H$ with mean 0 and identity covariance. Conditional on $H$, three views $X^i$ has normal distribution with mean $A_iZ$ ($A_i\in k\times k$) and covariance $\sigma_i I$ ($\sigma_1=2$, $\sigma_2=0.5$, $\sigma_3=0.2$). Our goal is to predict a random variable $Y$. Conditioning on $Z$, $y$ is a normal with mean $\beta Z$ ($\beta\in 1\times k$) and variance $\sigma=0.5$ ($A_i$ and $\beta $ are generated at random).\\

In the first experiment, we compare three groups of features. The first group is all the three views $X=(X^1,X^2,X^3)$ (denoted as $S_1$). The second is the $k$ dimensional feature $U_1^TX$ obtained by our algorithm (denoted as $S_2$). The third is also a $k$ dimensional feature, but it's just averaging three views, i.e. $X^1+X^2+X^3$ (denoted as $S_3$). We want to compare the square loss of the optimal predictor with the three features, therefore we run a regression with large amount of labeled data (5000) to make sure our linear predictors converges to the optimal ones. This experiment is repeated 100 times (use 100 different rotation matrix $A_i$s).\\

Figure 3 shows the square loss of the optimal predictor of $Y$ with three groups of features. The Y axis is the square loss while the X axis indicates different trials. The left of Figure 3 shows the square loss of $S_1$ and $S_2$ ($S_2$ is learned with 50000 unlabeled data), the right side of Figure 3 shows the square loss of $S_2$ and $S_3$. Easy to see the square loss of $S_1$ and $S_2$ is pretty close most of the time while the square loss of $S_3$ is much larger. Figure 4 shows the histogram of optimal square loss ratio for this 100 trials. The left figure is $\frac{\text{square loss of } S_2}{\text{square loss of } S_1}$ and the right figure is $\frac{\text{square loss of } S_3}{\text{square loss of } S_1}$. Easy to see in most cases $\frac{\text{square loss of } S_2}{\text{square loss of } S_1}$ distributed very close to $1$, i.e. the optimal square loss of $S_1$ and $S_2$ are almost the same while in most cases optimal square loss of $S_3$ is way larger than $S_1$.

\begin{figure}[h]
\centering
\includegraphics[width=11cm]{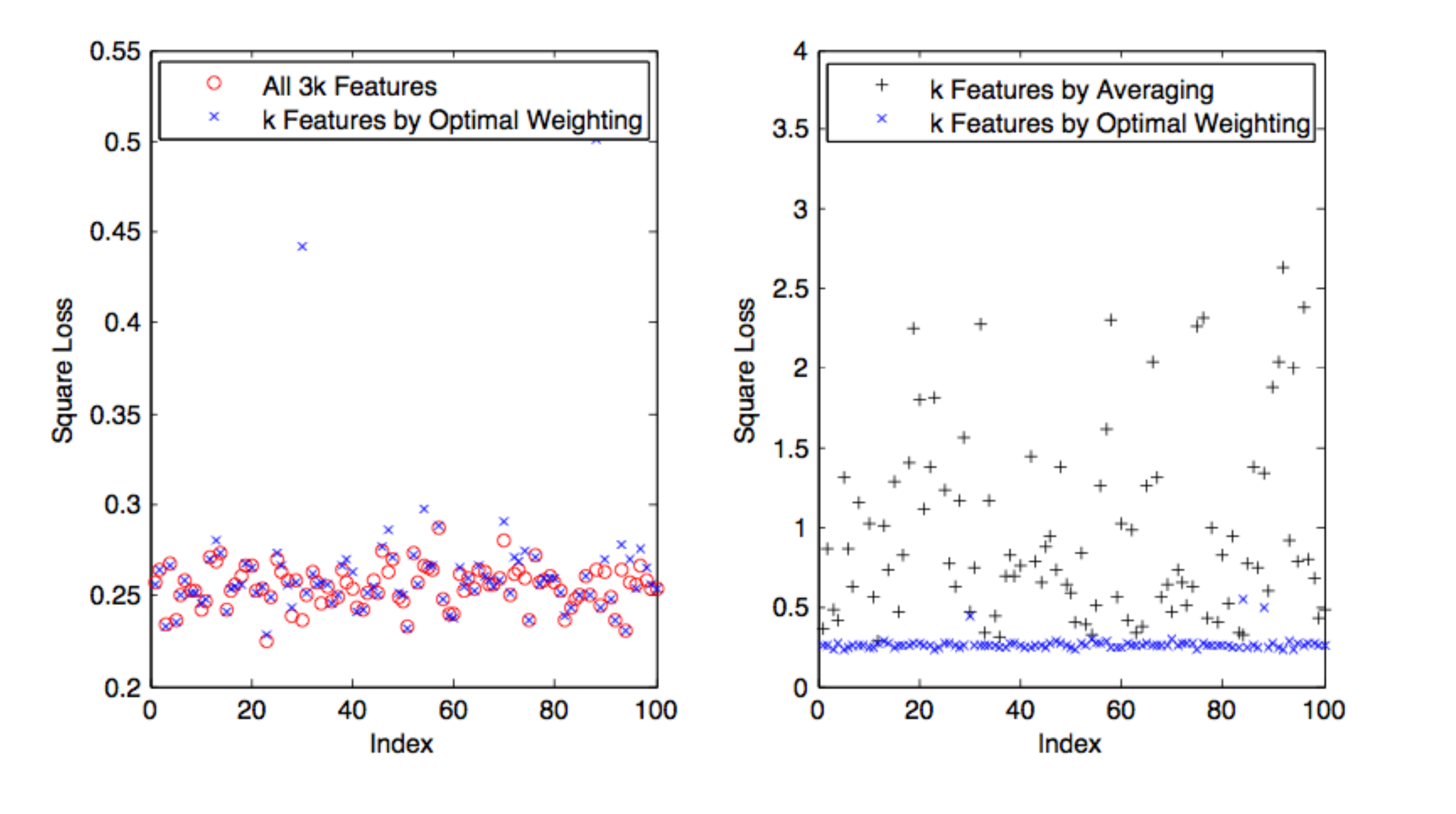}
\label{f3:non:float}
\caption{The square loss of the optimal linear predictor using three different feature sets}
\end{figure}

\begin{figure}[h]
\centering
\includegraphics[width=11cm]{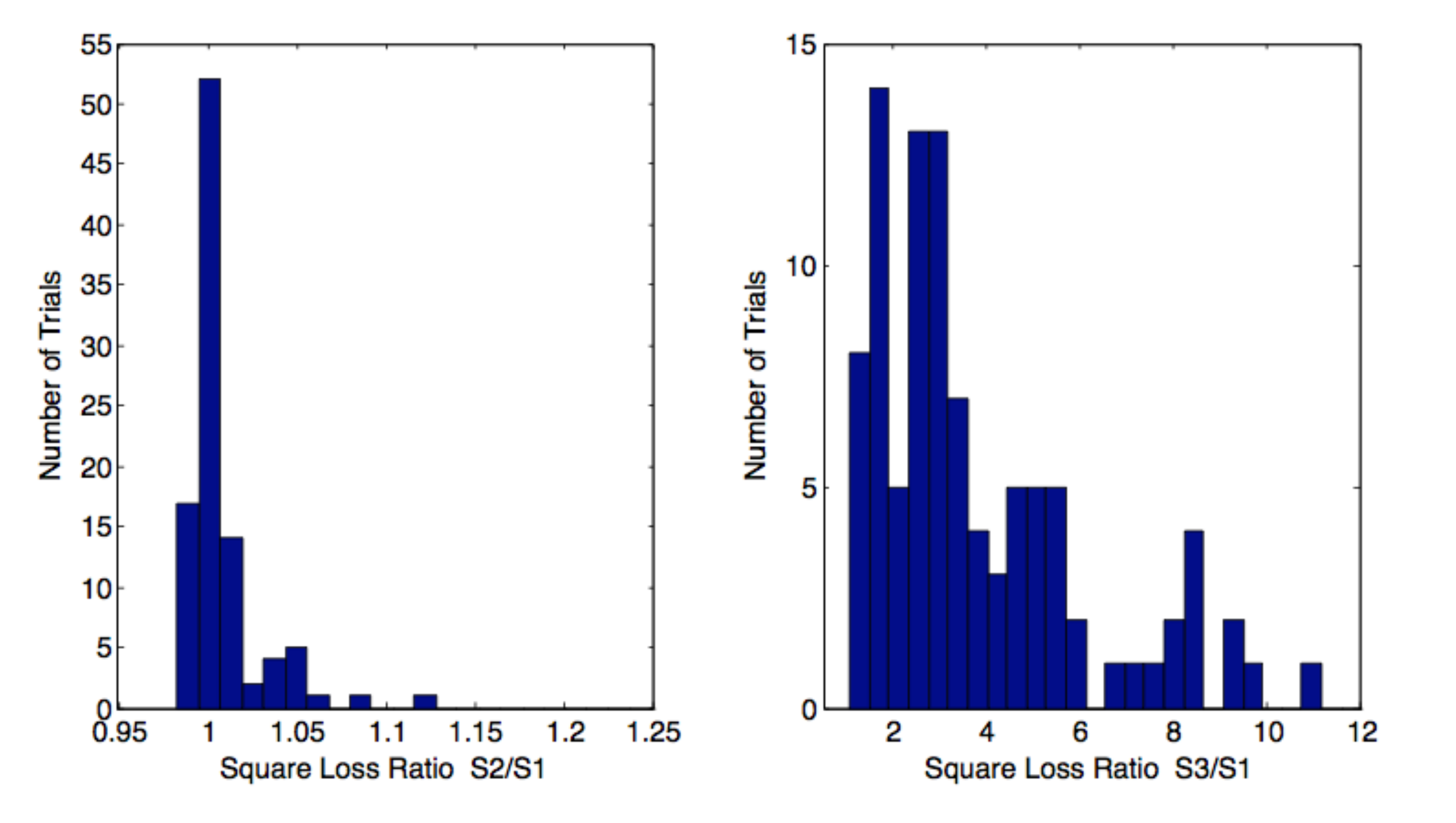}
\label{f3:non:float}
\caption{The histogram of optimal square loss ratio of different feature sets}
\end{figure}

The second experiment is about the sample size. We run the three views algorithm on different amount of $X$ to obtain $S_2$ (The sample size of Group 1 to 7 are: 500, 1000,  2000, 4000, 8000, 10000, 20000). For each group, we run 100 experiments and box plot the optimal square loss of $S_2$ in for each group. \\

Figure 5 shows the optimal square loss of $S_2$ of different sample sizes. The dash line at about y=0.256 is the average optimal square loss of the $3k$ feature set $S_1$, i.e. the asymptote optimal if the sample size is large enough. Our algorithm performs better as sample size increases. When sample size is about $20000$ (Group 7) the square loss of $S_2$ becomes close to the square loss of $S_1$.\\

\begin{figure}[h]
\centering
\includegraphics[width=8.6cm]{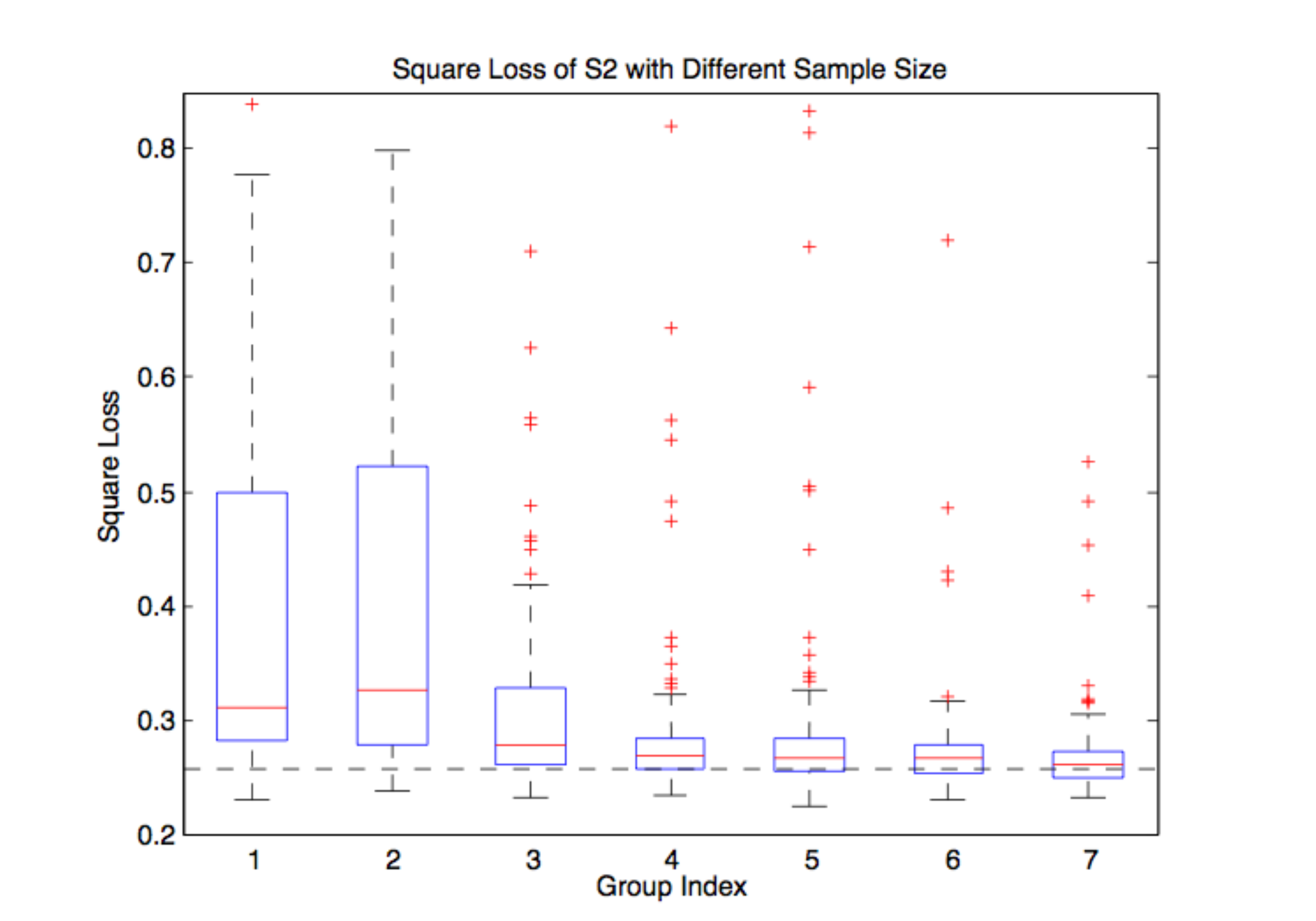} 
\caption{Box plot of the optimal square loss of $S_2$ obtained by three view algorithm with different sample size. The sample size of Group 1 to 7 are: 500, 1000,  2000, 4000, 8000, 10000, 20000. The dash line at about y=0.256 is the average optimal square loss of the $3k$ feature set $S_1$, i.e. the asymptote if the sample size is large enough.}
\end{figure}

The third experiment shows the advantage of our three view algorithm when the amount of labeled data is limited. Still consider predicting $Y$ with linear regression. As we know, the square loss of regression can be decomposed into bias and variance.In section 3 and the first experiment it is shown that the dimension reduction of our three views algorithm doesn't introduce any bias. Moreover, the variances are reduced due to reduced dimensionality. In the third experiment,  we compare the square loss of predicting $Y$ with $S_1$ and $S_2$ ($S_2$ is learned with 50000 unlabeled data). Four groups of experiment are performed with different amounts of labeled data (labeled data size are: 40,80,150,400, the dimension of $S_1$ is 30 and the dimension of $S_2$ is 10). In each group, 25 different model parameters (different $A_i$ and $\beta$) are randomly generated and for each parameter set up, we estimate the square loss by simulation. \\

The square loss of  25 parameter set ups in each group are box plotted in Figure 6 (labeled data size increase from left to right). Easy to see, when lacking labeled data our three view feature $S_2$ outperform the original feature $S_1$ and the difference becomes smaller when more labeled data are available.\\
\begin{figure}[h]
\centering
\includegraphics[width=8.6cm]{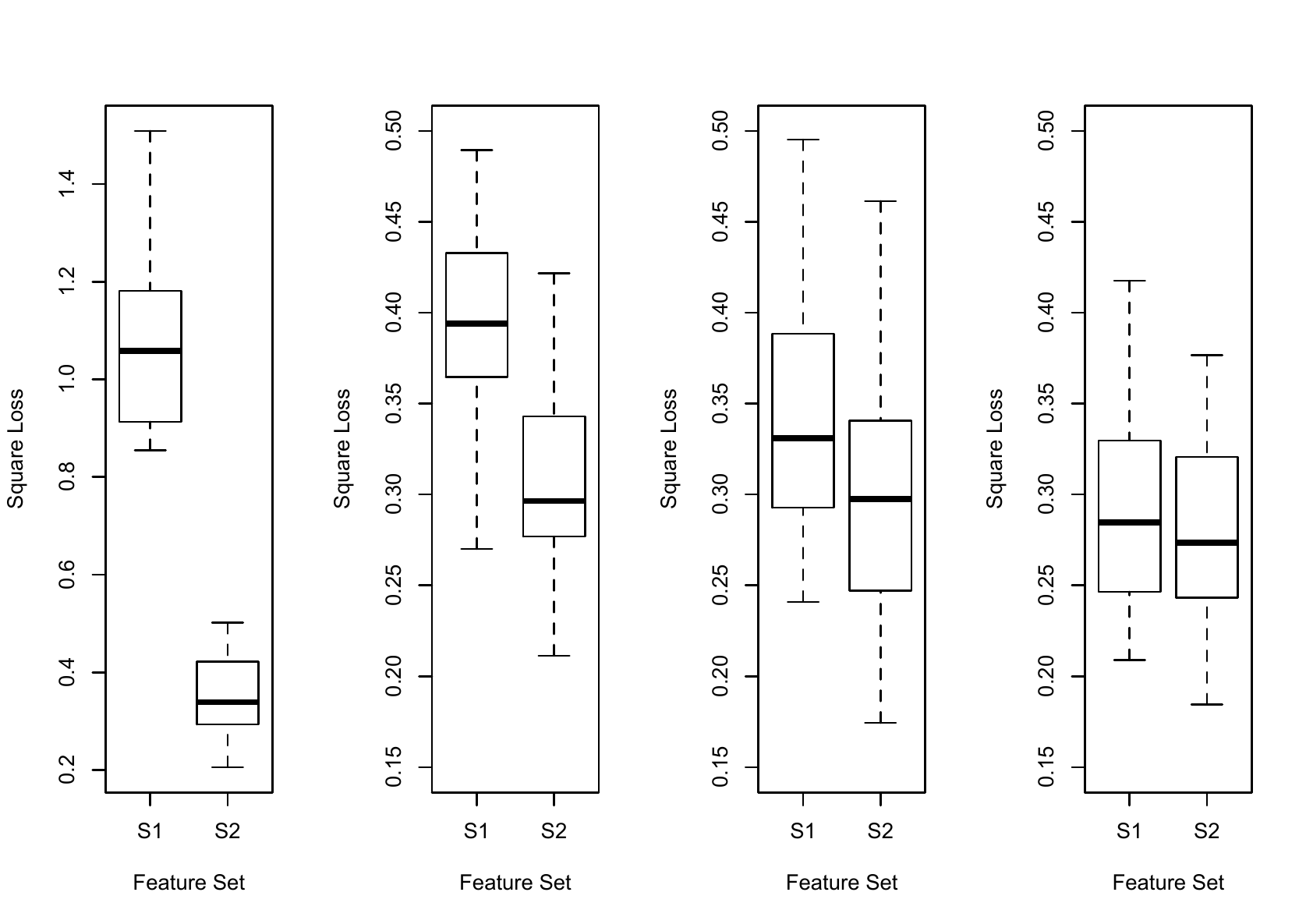} 
\caption{Box plot of the square loss of predicting $Y$ with $S_1$ and $S_2$ when labeled data is limited. The number of labeled data of each figure are (from left to right): 40,80,150,400.}
\end{figure}
\section{Summary}
We see how CCA can be applied for dimension reduction and optimal weighting in the multi-view model with a hidden state, which is assumed to carry most information for supervised learning problems. After doing CCA, we end up with a $k$ dimensional feature space which achieves optimal dimension reduction. This dimension reduction method works very well when huge amount of unlabeled data are available while labeled data are limited. If more than three views are available, we only need to group the views into three disjoint parts and these three parts can act as three views in our algorithm.

\bibliography{three_view_formal}{}
\bibliographystyle{plain}
\end{document}